\newtheorem{theorem}{Theorem}
\newtheorem{corollary}{Corollary}
\newcommand{\gen}{\operatorname{gen}}
\title{Federated PAC-Bayesian Learning on Non-IID data}
\name{Zihao Zhao$^{\star}$ \qquad Yang Liu$^{\dagger \ddagger}$ \qquad Wenbo Ding$^{\star  \ddagger}$ \qquad Xiao-Ping Zhang$^{\star}$ \thanks{This work was supported by the National Key R\&D Program of China under Grant No.2022ZD0160504, by Tsinghua Shenzhen International Graduate School-Shenzhen Pengrui Young Faculty Program of Shenzhen Pengrui Foundation (No. SZPR2023005), and by Tsinghua-Toyota Joint Research Institute inter-disciplinary Program and Tsinghua University (AIR)-Asiainfo Technologies (China) Inc. Joint Research Center under grant No. 20203910074.}}
\address{$^{\star}$Tsinghua-Berkeley Shenzhen Institute, $^{\dagger}$Institute for AI Industry Research, $^{\ddagger}$Shanghai AI Lab}
\begin{document}
%
\maketitle



\begin{abstract}
    Existing research has either adapted the Probably Approximately Correct (PAC) Bayesian framework for federated learning (FL) or used information-theoretic PAC-Bayesian bounds while introducing their theorems, but few considering the non-IID challenges in FL. Our work presents the first non-vacuous federated PAC-Bayesian bound tailored for non-IID local data. This bound assumes unique prior knowledge for each client and variable aggregation weights. We also introduce an objective function and an innovative Gibbs-based algorithm for the optimization of the derived bound. The results are validated on real-world datasets.
\end{abstract}

\begin{keywords}
Federated learning, PAC-Bayesian framework, generalization error
\end{keywords}

\section{Introduction}
To address privacy concerns in distributed learning, \textit{federated learning} (FL) has emerged as a viable solution, enabling multiple local clients to collaboratively train a model while retaining their private data and without sharing it~\cite{mcmahan2017communication, zhao2023towards}. However, in real-world scenarios, data across different devices is not identically and independently distributed (non-IID), which poses challenges in model training and convergence~\cite{zhao2018federated}.

Significant efforts have been made to improve performance and analyze convergence in non-IID FL~\cite{li2019convergence}, but few have provided theoretical guarantees by establishing generalization bounds. Most existing FL generalization analyses rely on the Probably Approximately Correct (PAC) Bayesian theory, first formulated by McAllester~\cite{mcallester1998some, mcallester1999pac}. Building on the McAllester's bound, these analyses typically compute local bounds or apply existing PAC-Bayesian bounds directly, overlooking the non-IID nature of FL. This approach is flawed, as the PAC-Bayesian framework assumes that each data point is IID, ignoring non-IID data and directly employing the PAC-Bayesian theory, which potentially results in inaccurate or overly relaxed bounds. Consequently, techniques developed for the PAC-Bayesian framework are not directly applicable to non-IID FL. Therefore, this work aims to advance the theoretical underpinnings of non-IID FL.

\textbf{Related works.}
The PAC-Bayesian framework has been extensively researched in recent years~\cite{seeger2002pac, catoni2007pac, oneto2020randomized}, yielding tighter and non-vacuous bounds. However, there has been limited exploration in the context of FL. Some studies have proposed information theoretic-based PAC-Bayesian bounds using Rate-distortion theory to prove generalization bounds~\cite{sefidgaran2022rate, barnes2022improved}, providing an information-theoretic perspective on enhancing generalization capacity. Others have followed McAllester’s approach, attempting to directly apply the FL paradigm to the bound. For example, the authors in~\cite{sefidgaran2023federated, chor2023more} applied McAllester’s bound in a multi-step FL scenario; Omni-Fedge~\cite{kesanapalli2021federated} used the PAC-Bayesian learning framework to construct a weighted sum objective function with a penalty, considering only a local client bound instead of the entire system, which precludes obtaining global information; and FedPAC~\cite{zhang2023federated} employed PAC learning to balance utility, privacy, and efficiency in FL. However, these approaches do not account for the non-IIDness of FL.

\textbf{Our contributions.} \textbf{First}, we derive a federated PAC-Bayesian learning bound for non-IID local data, providing a unified perspective on federated learning paradigms. To the best of our knowledge, this is the first non-vacuous bound for a model averaging FL framework. Specifically, due to the non-IID nature of clients, we assume that each client has unique prior knowledge rather than a common one. \textbf{Additionally}, the aggregation weights for non-IID clients vary instead of being uniform. Based on the derived bound, we define an objective function that can be computed by each local client rather than on the server and propose a Gibbs-based algorithm dubbed \textit{FedPB} for its optimization. This algorithm not only preserves the privacy of each client but also enhances efficiency. \textbf{Finally}, we validate our proposed bounds and algorithm on two real-world datasets, demonstrating the effectiveness of our bounds and algorithm.

\section{Problem Setting}
In this section, we introduce the federated PAC-Bayesian learning setting. The whole system comprises $K$ clients, each equipped with its own dataset $S_i = (x_i, y_i)_{i = 1}^{n} \subseteq (\mathcal{X}, \mathcal{Y})^{n}$ consisting of $n$ IID data points. Here $\mathcal{X}$ denotes the input space and $\mathcal{Y}$ denotes the output space. Each dataset $S_i$ is presumed to be drawn from an unknown data generating distribution $D_k^{\bigotimes n}$. Moreover, let $\ell: \mathcal{Z} \times \mathcal{W} \rightarrow \mathbb{R}^+$ be a given loss function and $h_k \in \mathcal{H}$ is a stochastic estimator on client $k$ where $\mathcal{H}$ is the hypothesis class. In the PAC-Bayesian framework, each client holds a tailored prior distribution $P_k$. The objective of each client is to furnish a posterior distribution $Q_k \in \mathcal{M}$, where $\mathcal{M}$ denotes the set of distributions over $\mathcal{H}$.
We then define the \textit{population risk}: 
\begin{equation}
    L(Q_1, \dots, Q_K) \triangleq \frac{1}{K} \sum_{k=1}^K \underset{h_k \sim Q_k}{\mathbb{E}} \underset{(x_{k}, y_{k}) \sim D_k}{\mathbb{E}}[\ell(h_k(x_k), y_k)],
\end{equation}
and the \textit{empirical risk}: 
\begin{equation}
    \hat{L}\left(Q_1, \dots, Q_K\right) \triangleq \frac{1}{nK} \sum_{k=1}^K  \underset{h_k \sim Q_k}{\mathbb{E}} \sum_{i=1}^n \ell\left(h_k\left(x_{k, i}\right), y_{k, i}\right),
\end{equation}
by averaging over the posterior distribution of each client.
In federated learning, each client will upload their posterior distributions to a central server, and then the server will aggregate the transmitted model in a weighted manner:
$$
\begin{aligned}
    \bar{P} =  \prod_{k=1}^K P_k^{p(k)}, \quad \bar{Q} =  \prod_{k=1}^K Q_k^{p(k)},
\end{aligned} 
$$
where $\bar{P}$ and $\bar{Q}$ are the global prior and posterior, respectively, and the averaging weight $p = (p(1), \dots, p(K))$ be a probability distribution on $\{1, \dots, K\}$. For the sake of generality, we can assume that $p(k) \in (0, 1)$ and $\sum_{k=1}^K p(k) = 1$. For intuition of this aggregation, we can see that minimizing the weighted objective function is actually equivalent to maximizing the logarithm of the corresponding posterior:
$
\min _h L(h) =\min _h \sum_{k=1}^K p(k) L_k(h)
=\max _h \ln \prod_{n=1}^N \\ p\left(h \mid \mathcal{D}_k\right)^{p(k)} .
$
In addition, we denote the Kullback-Leibler (KL) divergence as
$
D_{KL} (Q \| P) \triangleq \underset{Q}{\mathbb{E}} \left[ \log \frac{dQ}{dP} \right]
$ if $Q \ll P$ and $D_{KL} (Q \| P) = + \infty$ otherwise.

\section{Main theorem}
\label{sec3}
In this section, we will present our novel bounds on the non-IID FL scenario.

\begin{theorem}[Federated PAC-Bayesian learning bound]{
For any $\delta \in (0, 1]$, assume the loss function $\ell(\cdot, \cdot)$ is bounded in $[0, C]$, the following inequality holds uniformly for all posterior distributions $Q$ and for any $\delta \in (0, 1)$,
\begin{multline}
\thinmuskip=2mu
\medmuskip=4mu
\thickmuskip=2mu
\underset{S_1, \dots, S_K}{\mathbb{P}} \bigg\{ \forall {Q_1, \dots, Q_K}, L(Q_1, \dots, Q_K) \leq \hat{L} (Q_1, \dots, Q_K)   \\
\thinmuskip=0mu
\medmuskip=2mu
\thickmuskip=0mu
+ \frac{\sum_{k = 1}^K p(k) D_{KL}(Q_k \| P_k) + \log \frac{1}{\delta }}{\lambda}  + \frac{\lambda C^2}{8 K n} \bigg\} > 1 - \delta 
\end{multline}}
\label{thm1}
\end{theorem}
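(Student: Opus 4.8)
The plan is to run the classical three-step PAC-Bayesian recipe---change of measure, moment-generating-function control via Hoeffding's lemma, and Markov's inequality---but to execute it at the level of the individual clients and then glue the pieces together using the independence of the samples $S_1,\dots,S_K$. First I would write the generalization gap in terms of per-client quantities: setting $\Delta_k(h_k) = \mathbb{E}_{(x,y)\sim D_k}[\ell(h_k(x),y)] - \frac{1}{n}\sum_{i=1}^n\ell(h_k(x_{k,i}),y_{k,i})$, the definitions give
\[
L(Q_1,\dots,Q_K) - \hat{L}(Q_1,\dots,Q_K) = \frac{1}{K}\sum_{k=1}^K\mathbb{E}_{h_k\sim Q_k}[\Delta_k(h_k)].
\]

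For each client I would invoke the Donsker--Varadhan variational formula for the KL divergence (equivalently the Gibbs inequality): for any client-specific inverse temperature $\lambda_k>0$ and any fixed sample $S_k$,
\[
\lambda_k\,\mathbb{E}_{h_k\sim Q_k}[\Delta_k(h_k)] \le D_{KL}(Q_k\|P_k) + \log \mathbb{E}_{h_k\sim P_k}\big[e^{\lambda_k\Delta_k(h_k)}\big].
\]
Because the prior $P_k$ is data-independent, Fubini lets me swap $\mathbb{E}_{h_k\sim P_k}$ with $\mathbb{E}_{S_k}$; then, since $\Delta_k(h_k)$ is an average of $n$ independent terms each lying in an interval of length $C$ and centred at its mean, Hoeffding's lemma yields $\mathbb{E}_{S_k}\mathbb{E}_{h_k\sim P_k}[e^{\lambda_k\Delta_k(h_k)}] \le e^{\lambda_k^2 C^2/(8n)}$. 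The decisive step is then the aggregation: I would choose the $\lambda_k$ and the combination weights so that summing the per-client inequalities reproduces simultaneously the $p(k)$-weighted divergence $\sum_k p(k)D_{KL}(Q_k\|P_k)$ and the uniform averaging $\frac{1}{K}\sum_k$ of the risk. Crucially, because $S_1,\dots,S_K$ are independent, the expected product of the per-client exponential moments factorises, so a \emph{single} application of Markov's inequality to a product of the form $\prod_k\big(\mathbb{E}_{h_k\sim P_k}e^{\lambda_k\Delta_k}\big)^{w_k}$ produces one confidence term $\log\frac{1}{\delta}$ rather than the $\log\frac{K}{\delta}$ that a naive union bound over clients would incur. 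Rearranging and dividing by the appropriate factor then gives the stated inequality, and uniformity over all posteriors $Q_1,\dots,Q_K$ is automatic because the change-of-measure step holds simultaneously for every $Q_k$.

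I expect the main obstacle to be reconciling the three scalings that appear in the last two terms: the $p(k)$ weighting on the divergences, the uniform $\frac{1}{K}$ averaging of the risk, and the total sample size $Kn$ in the denominator of $\frac{\lambda C^2}{8Kn}$. These cannot all be produced by a single common temperature, so the tuning of the $\lambda_k$ (equivalently, the viewpoint that treats the aggregated prior and posterior $\bar{P}=\prod_k P_k^{p(k)}$, $\bar{Q}=\prod_k Q_k^{p(k)}$ as one PAC-Bayes object whose divergence collapses to $\sum_k p(k)D_{KL}(Q_k\|P_k)$) is the step that must be handled carefully, and it is where I would verify the constants most closely, since an incautious combination tends to inflate the concentration term by a factor depending on $\sum_k 1/p(k)$.
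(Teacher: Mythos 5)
Your outline follows essentially the same route as the paper's proof: a per-client Hoeffding bound on the moment-generating function of the generalization gap, multiplication across clients using the independence of $S_1,\dots,S_K$, a single Donsker--Varadhan change of measure in which the aggregated prior and posterior $\bar{P}=\prod_k P_k^{p(k)}$, $\bar{Q}=\prod_k Q_k^{p(k)}$ make the divergence collapse to $\sum_k p(k)D_{KL}(Q_k\|P_k)$, and one Chernoff/Markov step giving a single $\log\frac{1}{\delta}$. The aggregation step you single out as requiring the most care (reconciling the $p(k)$-weighted divergences with the uniform $\frac{1}{K}$ risk average and the $Kn$ denominator) is exactly the step the paper handles by asserting that identity and applying the variational formula to the weighted products, so your instinct about where the constants must be verified is well placed.
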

\begin{proof}
Let the local generalization error: $\gen (D_k, h_k) = \underset{(x_{k}, y_{k}) \sim D_k}{\mathbb{E}}[\ell(h_k(x_k), y_k)] - \frac{1}{n}\sum_{i=1}^n \ell\left(h_k\left(x_{k, i}\right), y_{k, i}\right)$, then the global generalization error: $\overline{\gen} (D, h) = L(Q_1, \dots, Q_K) \\-\hat{L}(Q_1, \dots, Q_K) = \frac{1}{K} \sum_{k=1}^K \underset{h_k \sim Q_k}{\mathbb{E}} \gen (D_k, h_k) $.
For any $\lambda > 0$ and $t > 0$, applying the Hoeffding's lemma to $\mathbb{E}[\ell_i] - \ell_i$, we have that, for each client $k$,
$$
\underset{S_k}{\mathbb{E}} \underset{P_k}{\mathbb{E}} \left[\mathrm{e}^{\frac{\lambda}{K} \gen(D_k, h_k)}\right] \leq \mathrm{e}^{\frac{\lambda^2 C^2}{8 K^2 n}}.
$$
Since each $S_k$ may come from different $D_k$. i.e., non-IID, we cannot directly plug this result to the PAC-Bayesian bound. Note that for each client $k \in [K]$, $P_i$ is independent of $S_1, \dots, S_K$, we have that 
$$
\underset{S_1}{{\mathbb{E}}} \underset{P_1}{{\mathbb{E}}} \dots \underset{S_K}{{\mathbb{E}}} \underset{P_K}{{\mathbb{E}}} \left[\mathrm{e}^{\frac{\lambda}{K} \sum_{k=1}^K \gen(D_k, h_k) }\right] \leq \mathrm{e}^{\frac{\lambda^2 C^2}{8 K n}}.
$$
And we apply Donsker and Varadhan's variational formula~\cite{donsker1975variational} for $P_1, \dots, P_K$ to get:
\begin{multline}
    \underset{S_1, \dots, S_K}{{\mathbb{E}}}\left[e^{\underset{Q_1, \dots, Q_K}{\sup} \lambda \underset{Q_1}{{\mathbb{E}}} \dots \underset{Q_K}{{\mathbb{E}}}[\frac{1}{K}\sum_{k=1}^K \gen(D_k, h_k)]} \right.\\
    \left./ e^{D_{KL} \left( \prod_{k=1}^K Q_k^{p(k)} \|  \prod_{k=1}^K P_k^{p(k)} \right)}\right] \leq \mathrm{e}^{\frac{\lambda^2 C^2}{8 K n}}.
    \label{eq4}
\end{multline}
Recall the definition of the global generalization error:
$$
\overline{\gen}(D, h) =  \frac{1}{K} \sum_{k=1}^K \underset{h_k \sim Q_K}{{\mathbb{E}}} \gen (D_k, h_k) ,
$$
and note that $D_{KL} \left( \prod_{k=1}^K Q_k^{p(k)} \|  \prod_{k=1}^K P_k^{p(k)} \right) = 
\\ \sum_{k = 1}^K p(k) D_{KL}(Q_k \| P_k)$. Applying the Chernoff bound:
$$
\scriptsize
\begin{aligned}
&\underset{S_1, \dots, S_K}{\mathbb{P}} \left[ \underset{Q_1, \dots, Q_K}{\sup}  \lambda \overline{\gen}(D, h) -\sum_{k = 1}^K p(k) D_{KL}(Q_k \| P_k) -\frac{\lambda^2 C^2}{8 K n} > s\right]  \\
& \leq \underset{S_1, \dots, S_K}{\mathbb{E}} \left[\mathrm{e}^{\underset{Q_1, \dots, Q_K}{\sup}  \lambda \overline{\gen}(D, h)  -\sum\limits_{k = 1}^K p(k) D_{KL}(Q_k \| P_k)-\frac{\lambda^2 C^2}{8 K n} }\right] e^{-s} \\
& \leq \mathrm{e}^{-s} .
\end{aligned}
$$
Let $\delta = e^{-s}$, that is, $s = - \log \delta$. Thus, plug this into the above result we have that
\begin{multline*}
\underset{S_1, \dots, S_K}{\mathbb{P}} \bigg\{ \exists {Q_1, \dots, Q_K}, \overline{\gen}(D, h) >  \\
\shoveright{ \frac{1}{\lambda} \sum_{k = 1}^K p(k) D_{KL}(Q_k \| P_k) + \frac{\lambda C^2}{8 K n} + \frac{1}{\lambda} \log \frac{1}{\delta }\bigg\} \leq \delta}.
\end{multline*}
Therefore, we prove the statement by leveraging the complement of the probability.
\end{proof}

The RHS of \Cref{thm1} comprises two components: the \textbf{empirical term} and the \textbf{complexity term}. Note that our bound eschews the typical smoothness and convexity assumptions on the loss often made by other FL frameworks. Moreover, an intuition can be presented for \Cref{thm1} that the bound will be tighter with the increasing of clients scales, which is further corroborated by the evaluation in \Cref{53}.

\begin{corollary}[The choice of $\lambda$]
Suppose $\lambda \in \Xi \triangleq \{0, \dots, \xi\}$ and $|\cdot|$ denotes the cardinality of a set. For any $\delta \in (0, 1)$ and a properly chosen $\lambda$, with probability at least $1 - \delta$,
    \begin{multline}
        L(Q_1, \dots, Q_K) \leq \hat{L}(Q_1, \dots, Q_K) \\
        + C \sqrt{\frac{\sum_{k = 1}^K p(k) D_{KL}(Q_k \| P_k) + \log \frac{|\Xi|}{\delta}}{2 K n}}.
    \end{multline}
\label{coro}
\end{corollary}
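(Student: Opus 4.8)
The plan is to treat \Cref{thm1} as a family of high-probability bounds indexed by the free parameter $\lambda>0$ and then optimize over $\lambda$. Writing $B_\lambda = \sum_{k=1}^K p(k) D_{KL}(Q_k\|P_k) + \log\frac{1}{\delta}$, the complexity term in \Cref{thm1} has the shape $\frac{B_\lambda}{\lambda} + \frac{\lambda C^2}{8Kn}$, which is minimized over $\lambda$ by a standard AM--GM (or one-line calculus) argument. The immediate obstruction is that the $\lambda$ achieving this minimum depends on the KL divergences, hence on the (data-dependent) posteriors $Q_k$, whereas \Cref{thm1} only licenses a \emph{fixed} $\lambda$ chosen before the samples are seen. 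The resolution is the usual grid-and-union-bound device, which is exactly why the factor $\log\frac{1}{\delta}$ gets inflated to $\log\frac{|\Xi|}{\delta}$ in the statement.

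Concretely, first I would instantiate \Cref{thm1} separately for each $\lambda\in\Xi$, but with confidence parameter $\delta/|\Xi|$ in place of $\delta$. A union bound over the $|\Xi|$ positive grid values of $\lambda$ then guarantees that, with probability at least $1-\delta$, the inequality
\[
L(Q_1,\dots,Q_K) \le \hat L(Q_1,\dots,Q_K) + \frac{\sum_{k=1}^K p(k)D_{KL}(Q_k\|P_k) + \log\frac{|\Xi|}{\delta}}{\lambda} + \frac{\lambda C^2}{8Kn}
\]
holds \emph{simultaneously} for every $\lambda\in\Xi$ and every choice of posteriors. Because the bound now holds for all grid values at once, I am free to select $\lambda$ after inspecting the data without breaking the guarantee.

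Next I would minimize the right-hand side over $\lambda$. Setting $B = \sum_{k=1}^K p(k)D_{KL}(Q_k\|P_k) + \log\frac{|\Xi|}{\delta}$ and applying AM--GM to $\frac{B}{\lambda} + \frac{\lambda C^2}{8Kn}$ yields the lower envelope $2\sqrt{\frac{B C^2}{8Kn}} = C\sqrt{\frac{B}{2Kn}}$, attained at $\lambda^{\star} = \sqrt{8KnB}/C$. Substituting this value reproduces precisely the claimed bound. The hard part is the discretization: $\lambda^{\star}$ is data-dependent and will in general not be an exact element of the fixed grid $\Xi$, so to make the argument fully rigorous I would have to choose $\Xi$ fine enough (for instance a geometric grid covering the admissible range of $\lambda^{\star}$) that the nearest grid point $\hat\lambda\in\Xi$ incurs only negligible slack relative to the optimum $C\sqrt{B/(2Kn)}$. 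Controlling this discretization error against the $\log|\Xi|$ penalty it introduces is the delicate trade-off underlying the corollary, and is the step I expect to require the most care.
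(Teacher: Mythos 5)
Your proposal follows essentially the same route as the paper: the paper makes the bound uniform over the grid $\Xi$ by summing the exponential-moment inequality over $\lambda\in\Xi$ and applying a single Chernoff step (which is equivalent to your union bound with confidence $\delta/|\Xi|$ per grid point), and then substitutes the AM--GM minimizer $\lambda^{\star}=\sqrt{8Kn B}/C$ to obtain the $C\sqrt{B/(2Kn)}$ form. The discretization gap you flag --- that the data-dependent $\lambda^{\star}$ need not lie in the fixed grid $\Xi$ --- is real, but the paper passes over it silently by simply ``choosing a proper minimizer'' without accounting for any rounding error, so on this point your write-up is actually more careful than the original.
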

\begin{proof}
    Suppose $\mathcal{S} = S_1 \cap S_2 \cap \dots \cap S_K$ and $\mathcal{Q} = Q_1 \cap Q_2 \cap \dots \cap Q_K$. Since we have the previous result \eqref{eq4} for some fixed $\lambda$, we can sum \eqref{eq4} over all $\lambda \in \Xi$:
    \begin{multline*}
        \underset{\lambda \in \Xi}{\sum} \underset{\mathcal{S}}{\mathbb{E}} \left[\mathrm{e}^{\underset{\mathcal{Q}}{\sup}  \lambda \overline{\gen}(D, h)  -\sum\limits_{k = 1}^K p(k) D_{KL}(Q_k \| P_k)-\frac{\lambda^2 C^2}{8 K n}}\right] \le |\Xi|,
    \end{multline*}
    which is equivalent to:
    \begin{multline*}
        \underset{\mathcal{S}}{\mathbb{E}} \left[\mathrm{e}^{\underset{\mathcal{Q}, \lambda \in \Xi}{\sup}  \lambda \overline{\gen}(D, h)  -\sum\limits_{k = 1}^K p(k) D_{KL}(Q_k \| P_k)-\frac{\lambda^2 C^2}{8 K n}}\right] \le |\Xi|.
    \end{multline*}
    Again, from the Chernoff bound:
    $$
    \thinmuskip=0mu
    \medmuskip=-0mu
    \thickmuskip=0mu
    \begin{aligned}
        &\underset{\mathcal{S}}{\mathbb{P}} \left[ \underset{\mathcal{Q}, \lambda \in \Xi}{\sup}  \lambda \overline{\gen}(D, h) -\sum_{k = 1}^K p(k) D_{KL}(Q_k \| P_k) -\frac{\lambda^2 C^2}{8 K n} > s\right]  \\
        & \leq \underset{\mathcal{S}}{\mathbb{E}} \left[\mathrm{e}^{\underset{\mathcal{Q}, \lambda \in \Xi}{\sup}  \lambda \overline{\gen}(D, h)  -\sum\limits_{k = 1}^K p(k) D_{KL}(Q_k \| P_k)-\frac{\lambda^2 C^2}{8 K n}}\right] e^{-s} \\
        & \leq |\Xi| \mathrm{e}^{-s} .
    \end{aligned}
    $$
    Solve $\delta = |\Xi| \mathrm{e}^{-s}$ to get:
    \begin{multline*}
    \underset{\mathcal{S}}{\mathbb{P}} \bigg\{ \exists {\mathcal{Q}, \lambda \in \Xi}, \overline{\gen}(D, h) > \\
    \frac{\sum_{k = 1}^K p(k) D_{KL}(Q_k \| P_k) + \log \frac{|\Xi|}{\delta}}{\lambda}
    + \frac{\lambda C^2}{8 K n} \bigg\} \leq \delta
    \end{multline*}
    Choosing a proper minimizer 
    $$
    \lambda^* = \sqrt{8 K n \left(\sum_{k = 1}^K p(k) D_{KL}(Q_k \| P_k) + \log \frac{|\Xi|}{\delta}\right)} / C,
    $$
    we obtain the bound of this corollary.
\end{proof}

\Cref{coro} generally offers a strategy for selecting the value of parameter $\lambda$ so that the complexity term can be minimized.

\section{FedPB: Optimize the upper bound}

We denote $\mathcal{L}_k = \underset{h_k \sim Q_k}{\mathbb{E}} \frac{1}{n} \sum_{i=1}^n \ell\left(h_k\left(x_{k, i}\right), y_{k, i}\right)$.
Consider the following local objective function:
\begin{equation}
    \mathcal{J} (Q_k) = \lambda \mathcal{L}_k + p(k) D_{KL}(Q_k \| P_k)
    \label{local_obj}
\end{equation}
In our methodology, we introduce FedPB for a general scenario. It comprises two phases, designed to iteratively optimize the priors and posteriors for each client. Notably, in contrast to previous studies, clients are not required to upload their private prior and posterior distributions to the server, ensuring their privacy.

\textbf{Phase 1 (Optimize the posterior).} Given a fixed parameter $\lambda > 0$ and the prior $P_k^{t}$ during the training epoch $t + 1$, we aim to optimize the posterior $Q_k^{t}$ as
$
\hat{Q}_{k}^{t+1} = \arg \min_{Q_k} \mathcal{J} (Q_k),
$
yielding the solution:
$$
\frac{d \hat{Q}_{k}^{t+1}}{d P_k^t}(h)=\frac{\exp \left(-\lambda \ell\left(h, z_i\right)\right)}{\mathbb{E}_{h \sim P_k^t}\left[\exp \left(-\lambda \ell\left(h, z_i\right)\right)\right]}.
$$

\textbf{Phase 2 (Optimize the prior).} Having derived the optimal posterior $\hat{Q}_{k}^{t+1}$, the prior can be updated by $\hat{P}_{k}^{t+1} = {Q}_{k}^{t}$ since it minimizes $D_{KL}(Q_k \| P_k)$.

\textbf{Link with personalized federated learning.} Since the prior $\hat{P}_{k}^{t+1} = {Q}_{k}^{t}$ and ${Q}_{k}^{t}$ is equal to the aggregated global posterior at epoch $t - 1 $, the prior can be viewed as the global knowledge. Optimizing the objective function \eqref{local_obj} minimizes the disparity between the global and local knowledge, which is a a prevalent personalization strategy in FL~\cite{zhang2022personalized}.

\textbf{Re-parameterization trick.} Utilizing the Bayesian neural network~\cite{springenberg2016bayesian} as the local model aligns with our setting where all parameters are random, and we are optimizing their posterior distribution. In particular, the prior $P_k$ and posterior $Q_k$ are defined as follows:
$$
\begin{aligned}
& P_k (w; \vartheta_{P_k}) = \mathcal{N} \left( w; \mu_{P_k}, \sigma_{P_k}^2 I_d \right) = \prod_{i=1}^d \mathcal{N}\left(w_i ; \mu_{P_k, i}, \sigma_{P_k, i}^2\right) \\
& Q_{k}(w; \vartheta_{Q_k}) = \mathcal{N} \left( w; \mu_{Q_{k}}, \sigma_{Q_{k}}^2 I_d \right) = \prod_{i=1}^d \mathcal{N}\left(w_i; \mu_{Q_{k}, i}, \sigma_{Q_{k}, i}^2\right)
\end{aligned},
$$with every model parameter $w_i$ being independent.  Computing the Gibbs posterior directly can be challenging, hence we select the gradient descent as an alternative way. The update rule of \eqref{local_obj} at round $t+1$ is:
$$
\thinmuskip=0mu
\medmuskip=-0mu
\thickmuskip=0mu
\begin{aligned}
    \mu_{Q_k, i}^{t+1} &= \mu_{Q_k, i}^{t} - \lambda \nabla_{\mu_{Q_k, i}} \mathcal{L}_k - \frac{p(k) (\mu_{Q_k, i} - \mu_{P_k, i})}{ \sigma_{Q_k, i}^2}, \\
    \sigma_{Q_k, i}^{t+1} &= \sigma_{Q_k, i}^{t} - \lambda \nabla_{\sigma_{Q_k, i}} \mathcal{L}_k + \frac{p(k) (\sigma_{P_k, i}^2 - \sigma_{Q_k, i}^2 + \left(\mu_{P_k, i}-\mu_{Q_k, i}\right)^2)}{\sigma_{Q_k, i}^3},
\end{aligned}
$$
where the parameter $\lambda$ can be regarded as the learning rate of the GD and the KL-divergence is the regularization term. Calculating the gradients $\nabla_{\mu_{Q_k, i}}$ and $\nabla_{\mu_{Q_k, i}}$ directly can be intricate, but the re-parameterization trick is capable of tackling this issue. Concretely, we translate the $h \sim Q_k$ to $\varepsilon \sim \mathcal{N}(0, I_d)$ and then compute the deterministic function $h = \mu + \sigma \odot \varepsilon$, where $\odot$ signifies an element-wise multiplication. As a result, we have $\nabla_{\mu_{Q_k, i}} \underset{h \sim Q}{\mathbb{E}} l(h) = \nabla_{\mu_{Q_k, i}} \underset{\varepsilon \sim q(\varepsilon)}{\mathbb{E}} l(\epsilon)$, indicating its computability using an end-to-end framework with back-propagation.



\section{Evaluation}
In this section, we demonstrate our algorithm and our theoretical arguments with the FL non-IID setting. Specifically, the aggregation weight $p(k)$ is defined as the sample ratio of client $k$ relative to the entire data size across all clients. For the global aggregation, the global mean and covariance are calculated by $\bar{\mu} = \sum_{k=1}^K p(k) \mu_k \sigma_k^{-2} / \sum_{k=1}^K p(k) \sigma_k^{-2}$ and $\bar{\sigma} = 1 / \sum_{k = 1}^K p(k) \sigma_k^{-2}$, respectively.
Furthermore, we utilize two real-world datasets: MedMNIST (medical image analysis)~\cite{medmnistv1} and CIFAR-10~\cite{krizhevsky2009learning}. For each dataset, we adopt three distinct data-generating approaches for local clients: 1) \textbf{Balanced}: each client holds an equal number of samples; 2) \textbf{Unbalanced}: varying sample counts per client (e.g., $[0.05, 0.05, 0.05, 0.05, 0.1, 0.1, 0.1, 0.1, 0.2, 0.2]$ for 10 clients); 3) \textbf{Dirichlet}: differing sample counts per client following a Dirichlet distribution~\cite{yurochkin2019bayesian}. Besides, the entire FL system encompasses $K=10$ clients, initializing their posterior models uniformly from a global posterior model.

Additionally, we deploy two versions of Bayesian neural networks: one with 2 convolutional layers for MedMNIST and another with 3 layers for CIFAR-10. The CrossEntropy loss serves as our loss function, and it is optimized by the Adam optimizer~\cite{kingma2014adam} with a learning rate of 1e-3.

\subsection{Bound evaluation}
To validate our bounds, we set the confidence bound on $1 - \delta = 95\%$. Our evaluation underscores a correlation between the generalization error and the complexity, emphasizing the tightness of our bound. Fig.~\ref{bound} illustrates an initial increase in the generalization error and a concurrent decrease in complexity during the early stages of the training process, attributed to empirical loss optimization. Subsequently, as neural network training advances, the KL-divergence stabilizes. Throughout this progression, we can observe that the generalization error is consistently bounded by the complexity value.

\begin{figure}[!htb]

\begin{minipage}[b]{.45\linewidth}
  \centering
  \centerline{\includegraphics[width=1\linewidth]{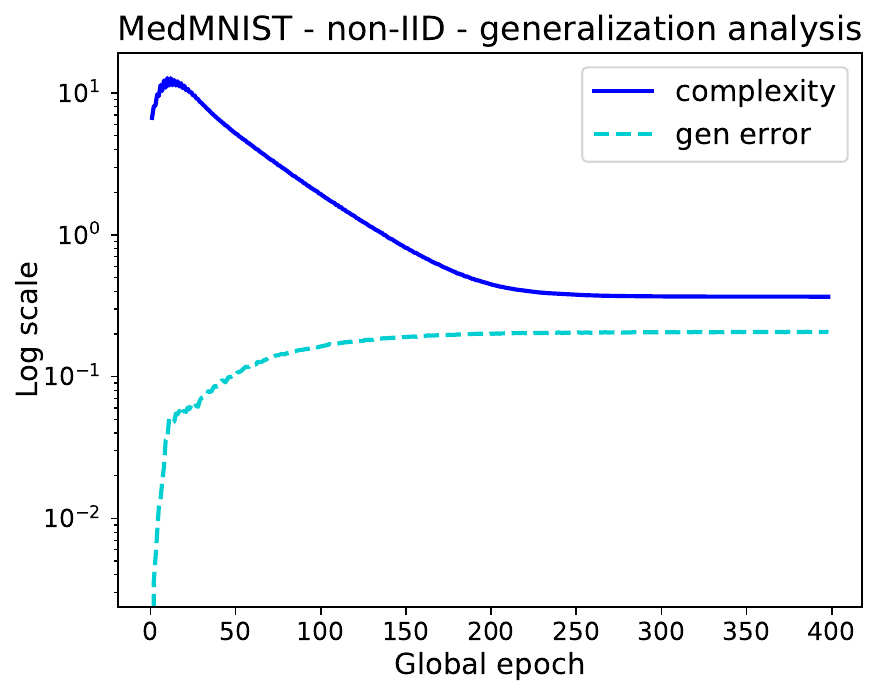}}
  \centerline{(a) Gen. for MedMNIST} \medskip
\end{minipage}
\hfill
\begin{minipage}[b]{0.45\linewidth}
  \centering
  \centerline{\includegraphics[width=1\linewidth]{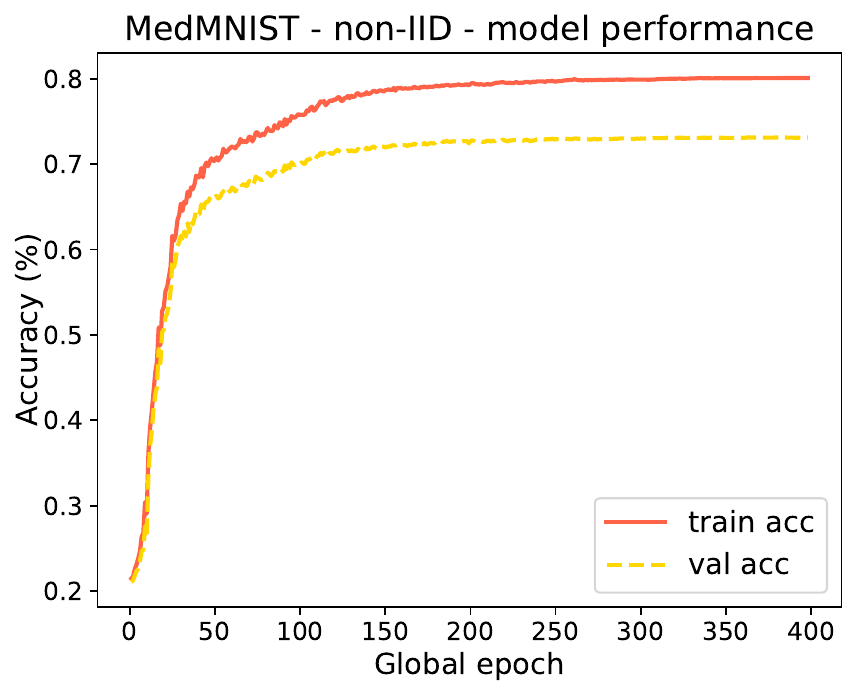}}
  \centerline{(b) Acc. for MedMNIST} \medskip
\end{minipage}

\begin{minipage}[b]{.45\linewidth}
  \centering
  \centerline{\includegraphics[width=1\linewidth]{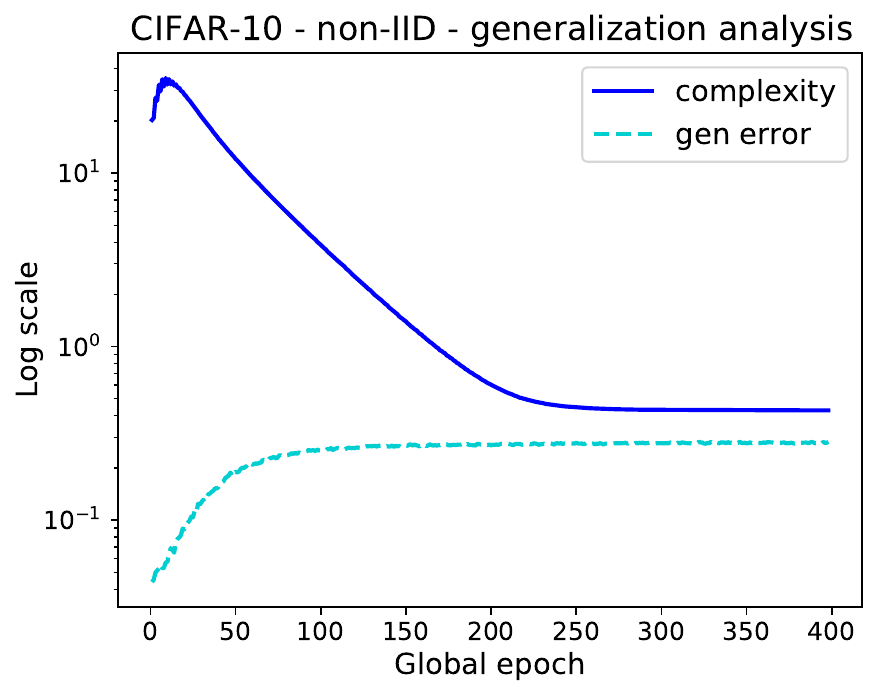}}
  \centerline{(c) Gen. for CIFAR-10} \medskip
\end{minipage}
\hfill
\begin{minipage}[b]{0.45\linewidth}
  \centering
  \centerline{\includegraphics[width=1\linewidth]{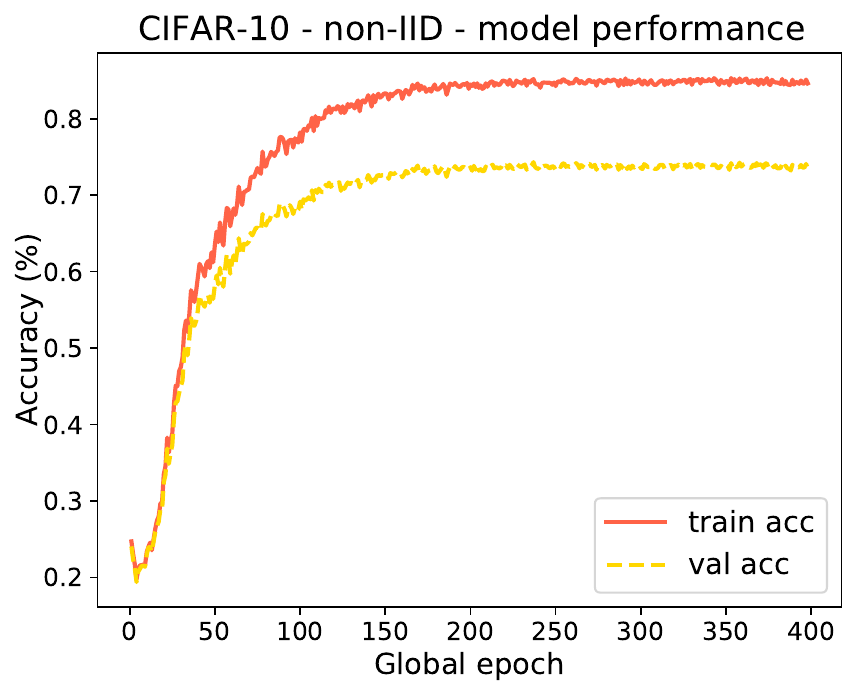}}
  \centerline{(d) Acc. for CIFAR-10} \medskip
\end{minipage}
\vspace{-18pt}
\caption{The results of the generalization error and model performance of FedPB over the Dirichlet generating method.}
\label{bound}
\end{figure}

\subsection{Data-dependent prior}
Here, we preform the ablation study of data-dependent (trainable) prior compared with data-independent (fixed, chosen before training) prior and report the mean ± standard deviation (std) accuracy of the global model in~\Cref{prior}, evaluated over multiple experimental seeds. The results demonstrate the superior efficacy of the data-dependent strategy over both datasets across all three scenarios. This superiority arises from the data-dependent prior's ability to harness more global knowledge, combined with its adaptability during training.
\vspace{-15pt}
\begin{table}[htb]
\centering
\caption{Model accuracy (\%) for the data-independent prior and data-dependent prior in three data-generating scenarios.}
\vspace{3pt}
\label{prior}
\resizebox{\columnwidth}{!}{%
\begin{tabular}{ccccccc}
\toprule
\multirow{2}{*}{Method} & \multicolumn{3}{c}{MedMNIST} & \multicolumn{3}{c}{CIFAR10} \\
\cmidrule(lr){2-4} \cmidrule(lr){5-7}
& Balanced & Unbalanced & Dirichlet & Balanced & Unbalanced & Dirichlet \\
\midrule
\begin{tabular}[c]{@{}c@{}}Data-\\ independent\end{tabular} & \(53.47 \pm 1.12\) & \(49.44 \pm 1.10\) & \(55.24 \pm 6.92\) & \(50.89 \pm 0.62\) & \(47.19 \pm 0.92\) & \(57.93 \pm 0.55\) \\
\begin{tabular}[c]{@{}c@{}}Data-\\ dependent\end{tabular} & \(\mathbf{77.10 \pm 4.25}\) & \(\mathbf{77.34 \pm 3.42}\) & \(\mathbf{77.48 \pm 4.75}\) & \(\mathbf{84.41 \pm 0.94}\) & \(\mathbf{79.39 \pm 0.56}\) & \(\mathbf{86.11 \pm 0.53}\) \\
\bottomrule
\end{tabular}
}
\end{table}
\vspace{-15pt}

\subsection{Different client scales}
\label{53}
Lastly, we assess the influence of varying client scales on our complexity bounds. As depicted in Fig.~\ref{diff_clients}, in the evaluation of both datasets with the Dirichlet generating method, by increasing the number of clients $K$ from 10 to 20 and 50, we observe a consistent decrease in the value of complexity. This observation aligns with  our analysis in~\Cref{thm1}.
\vspace{-10pt}
\begin{figure}[!htb]
    \centering
    \includegraphics[width=0.85\linewidth]{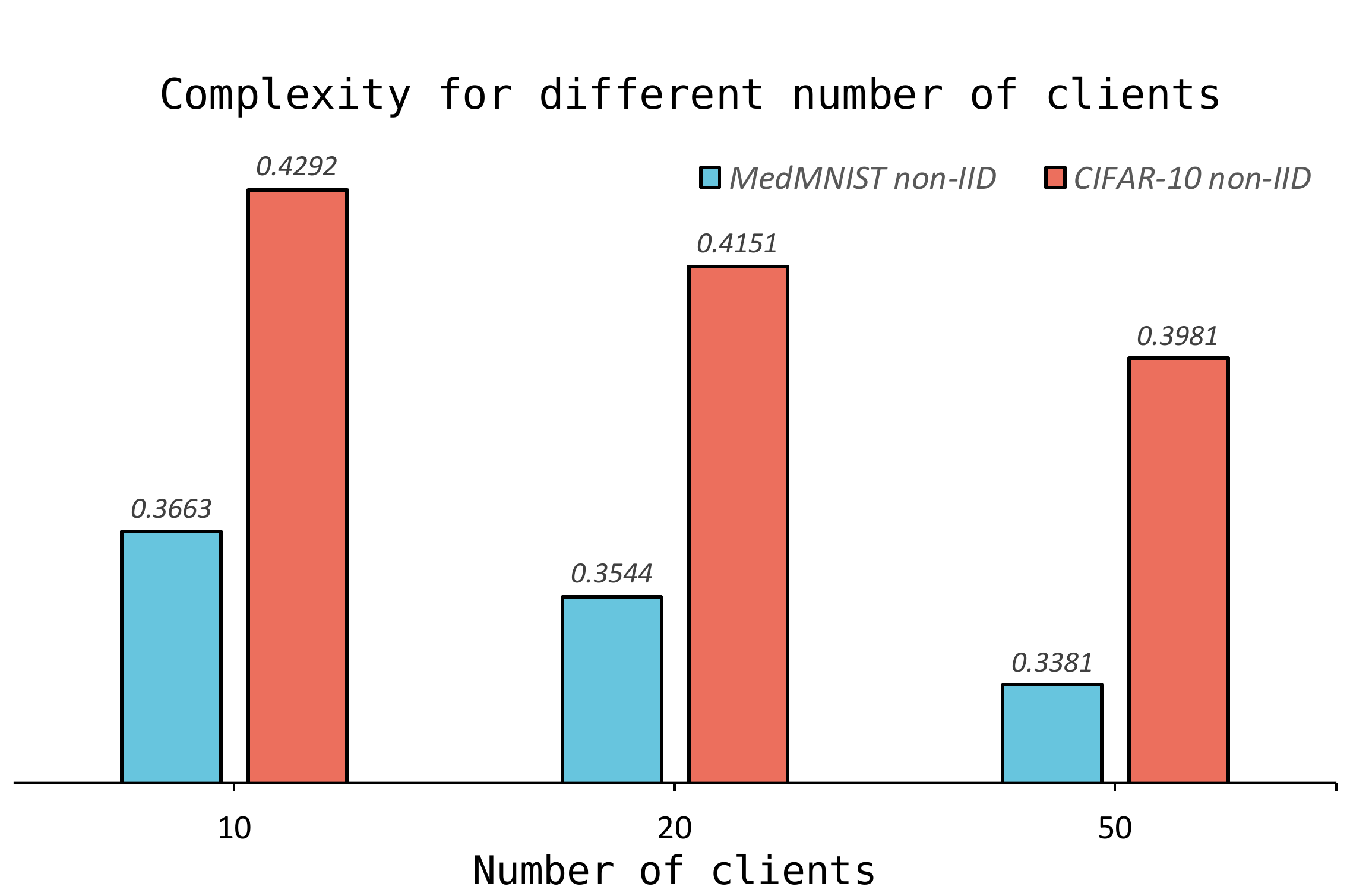}
    \vspace{-10pt}
    \caption{The impact of different client scales over FedPB on the value of the complexity term.}
    \label{diff_clients}
\end{figure}
\vfill\pagebreak

\bibliographystyle{IEEEbib}
\bibliography{refs}

\begin{thebibliography}{10}

\bibitem{mcmahan2017communication}
Brendan McMahan, Eider Moore, Daniel Ramage, Seth Hampson, and Blaise~Aguera
  y~Arcas,
\newblock ``Communication-efficient learning of deep networks from
  decentralized data,''
\newblock in {\em Artificial intelligence and statistics}. PMLR, 2017, pp.
  1273--1282.

\bibitem{zhao2023towards}
Zihao Zhao, Yuzhu Mao, Yang Liu, Linqi Song, Ye~Ouyang, Xinlei Chen, and Wenbo
  Ding,
\newblock ``Towards efficient communications in federated learning: A
  contemporary survey,''
\newblock {\em Journal of the Franklin Institute}, 2023.

\bibitem{zhao2018federated}
Yue Zhao, Meng Li, Liangzhen Lai, Naveen Suda, Damon Civin, and Vikas Chandra,
\newblock ``Federated learning with non-iid data,''
\newblock {\em arXiv preprint arXiv:1806.00582}, 2018.

\bibitem{li2019convergence}
Xiang Li, Kaixuan Huang, Wenhao Yang, Shusen Wang, and Zhihua Zhang,
\newblock ``On the convergence of fedavg on non-iid data,''
\newblock {\em arXiv preprint arXiv:1907.02189}, 2019.

\bibitem{mcallester1998some}
David~A McAllester,
\newblock ``Some pac-bayesian theorems,''
\newblock in {\em Proceedings of the eleventh annual conference on
  Computational learning theory}, 1998, pp. 230--234.

\bibitem{mcallester1999pac}
David~A McAllester,
\newblock ``Pac-bayesian model averaging,''
\newblock in {\em Proceedings of the twelfth annual conference on Computational
  learning theory}, 1999, pp. 164--170.

\bibitem{seeger2002pac}
Matthias Seeger,
\newblock ``Pac-bayesian generalisation error bounds for gaussian process
  classification,''
\newblock {\em Journal of machine learning research}, vol. 3, no. Oct, pp.
  233--269, 2002.

\bibitem{catoni2007pac}
Olivier Catoni,
\newblock ``Pac-bayesian supervised classification: the thermodynamics of
  statistical learning,''
\newblock {\em arXiv preprint arXiv:0712.0248}, 2007.

\bibitem{oneto2020randomized}
Luca Oneto, Michele Donini, Massimiliano Pontil, and John Shawe-Taylor,
\newblock ``Randomized learning and generalization of fair and private
  classifiers: From pac-bayes to stability and differential privacy,''
\newblock {\em Neurocomputing}, vol. 416, pp. 231--243, 2020.

\bibitem{sefidgaran2022rate}
Milad Sefidgaran, Romain Chor, and Abdellatif Zaidi,
\newblock ``Rate-distortion theoretic bounds on generalization error for
  distributed learning,''
\newblock {\em Advances in Neural Information Processing Systems}, vol. 35, pp.
  19687--19702, 2022.

\bibitem{barnes2022improved}
LP~Barnes, Alex Dytso, and H~Vincent Poor,
\newblock ``Improved information theoretic generalization bounds for
  distributed and federated learning,''
\newblock in {\em 2022 IEEE International Symposium on Information Theory
  (ISIT)}. IEEE, 2022, pp. 1465--1470.

\bibitem{sefidgaran2023federated}
Milad Sefidgaran, Romain Chor, Abdellatif Zaidi, and Yijun Wan,
\newblock ``Federated learning you may communicate less often!,''
\newblock {\em arXiv preprint arXiv:2306.05862}, 2023.

\bibitem{chor2023more}
Romain Chor, Milad Sefidgaran, and Abdellatif Zaidi,
\newblock ``More communication does not result in smaller generalization error
  in federated learning,''
\newblock {\em arXiv preprint arXiv:2304.12216}, 2023.

\bibitem{kesanapalli2021federated}
Sai~Anuroop Kesanapalli and BN~Bharath,
\newblock ``Federated algorithm with bayesian approach: Omni-fedge,''
\newblock in {\em ICASSP 2021-2021 IEEE International Conference on Acoustics,
  Speech and Signal Processing (ICASSP)}. IEEE, 2021, pp. 3075--3079.

\bibitem{zhang2023federated}
Xiaojin Zhang, Anbu Huang, Lixin Fan, Kai Chen, and Qiang Yang,
\newblock ``Probably approximately correct federated learning,''
\newblock {\em arXiv preprint arXiv:2304.04641}, 2023.

\bibitem{donsker1975variational}
Monroe~D Donsker and SR~Srinivasa Varadhan,
\newblock ``On a variational formula for the principal eigenvalue for operators
  with maximum principle,''
\newblock {\em Proceedings of the National Academy of Sciences}, vol. 72, no.
  3, pp. 780--783, 1975.

\bibitem{zhang2022personalized}
Xu~Zhang, Yinchuan Li, Wenpeng Li, Kaiyang Guo, and Yunfeng Shao,
\newblock ``Personalized federated learning via variational bayesian
  inference,''
\newblock in {\em International Conference on Machine Learning}. PMLR, 2022,
  pp. 26293--26310.

\bibitem{springenberg2016bayesian}
Jost~Tobias Springenberg, Aaron Klein, Stefan Falkner, and Frank Hutter,
\newblock ``Bayesian optimization with robust bayesian neural networks,''
\newblock {\em Advances in neural information processing systems}, vol. 29,
  2016.

\bibitem{medmnistv1}
Jiancheng Yang, Rui Shi, and Bingbing Ni,
\newblock ``Medmnist classification decathlon: A lightweight automl benchmark
  for medical image analysis,''
\newblock in {\em IEEE 18th International Symposium on Biomedical Imaging
  (ISBI)}, 2021, pp. 191--195.

\bibitem{krizhevsky2009learning}
Alex Krizhevsky, Geoffrey Hinton, et~al.,
\newblock ``Learning multiple layers of features from tiny images,''
\newblock 2009.

\bibitem{yurochkin2019bayesian}
Mikhail Yurochkin, Mayank Agarwal, Soumya Ghosh, Kristjan Greenewald, Nghia
  Hoang, and Yasaman Khazaeni,
\newblock ``Bayesian nonparametric federated learning of neural networks,''
\newblock in {\em International conference on machine learning}. PMLR, 2019,
  pp. 7252--7261.

\bibitem{kingma2014adam}
Diederik~P Kingma and Jimmy Ba,
\newblock ``Adam: A method for stochastic optimization,''
\newblock {\em arXiv preprint arXiv:1412.6980}, 2014.

\end{thebibliography}

\end{document}